\newtheorem{lemma}{Lemma}
\newtheorem*{theorem}{Theorem}
\title{\LARGE \bf 
    GFM-Planner: Perception-Aware Trajectory Planning \\ with Geometric Feature Metric
}
\author{
    Yue Lin, Xiaoxuan Zhang, Yang Liu*, Dong Wang, Huchuan Lu
    \thanks{All authors are with Dalian University of Technology, Dalian 116024, China. *Corresponding author: Yang Liu, \tt\small{ly@dlut.edu.cn}.}
    \thanks{This work is supported by the National Natural Science Foundation of China under grant Nos. 62293542, U23A20384, 62476044, and 62388101.}
}
\begin{document}

\maketitle
\thispagestyle{empty}
\pagestyle{empty}

\begin{abstract}

Like humans who rely on landmarks for orientation, autonomous robots depend on feature-rich environments for accurate localization. In this paper, we propose the GFM-Planner, a perception-aware trajectory planning framework based on the geometric feature metric, which enhances LiDAR localization accuracy by guiding the robot to avoid degraded areas. First, we derive the Geometric Feature Metric (GFM) from the fundamental LiDAR localization problem. Next, we design a 2D grid-based Metric Encoding Map (MEM) to efficiently store GFM values across the environment. A constant-time decoding algorithm is further proposed to retrieve GFM values for arbitrary poses from the MEM. Finally, we develop a perception-aware trajectory planning algorithm that improves LiDAR localization capabilities by guiding the robot in selecting trajectories through feature-rich areas. Both simulation and real-world experiments demonstrate that our approach enables the robot to actively select trajectories that significantly enhance LiDAR localization accuracy.

\end{abstract}

\section{Introduction}

Accurate autonomous localization is essential for robots operating in environments without GPS, especially when performing precise tasks. Autonomous localization methods in robotics are generally divided into visual localization~\cite{qin2018vins,geneva2020openvins,campos2021orb} and LiDAR-based localization~\cite{xu2021fast,xu2022fast,bai2022faster}. While these algorithms can provide reliable pose estimates in most scenarios, localization accuracy may suffer in environments with sparse features. Perception-aware trajectory planning aims to enable robots to actively select trajectories that enhance localization accuracy, minimizing errors induced by feature sparsity.

Although perception-aware planning frameworks based on vision~\cite{zhang2019beyond,bartolomei2020perception,chen2024apace} have reached considerable maturity, they are limited by the inherent challenges of visual localization, such as variations in lighting~\cite{falanga2018pampc} and texture loss~\cite{costante2016perception}. In contrast, LiDAR can perceive geometric features even in texture-sparse environments and is less affected by lighting changes, leading to more robust localization performance. However, research on LiDAR-based perception-aware planning remains limited~\cite{chai2024lf}.

The goal of LiDAR-based perception-aware planning is to guide robots away from areas with insufficient geometric features, which we refer to as \textit{degraded areas}. Fig.~\ref{fig:head}(a) illustrates a typical scenario involving a degraded area. In this example, when navigating from point S to point G, the blue trajectory generated by the perception-aware planning algorithm actively avoids the degraded area, instead navigating through regions rich in geometric features that support better LiDAR localization. Therefore, to achieve perception-aware trajectory planning, the key challenge is to design a metric that quantifies the localization potential of each pose.

\begin{figure}
    \vspace{2 mm}
    \centering
    \includegraphics[width=\linewidth]{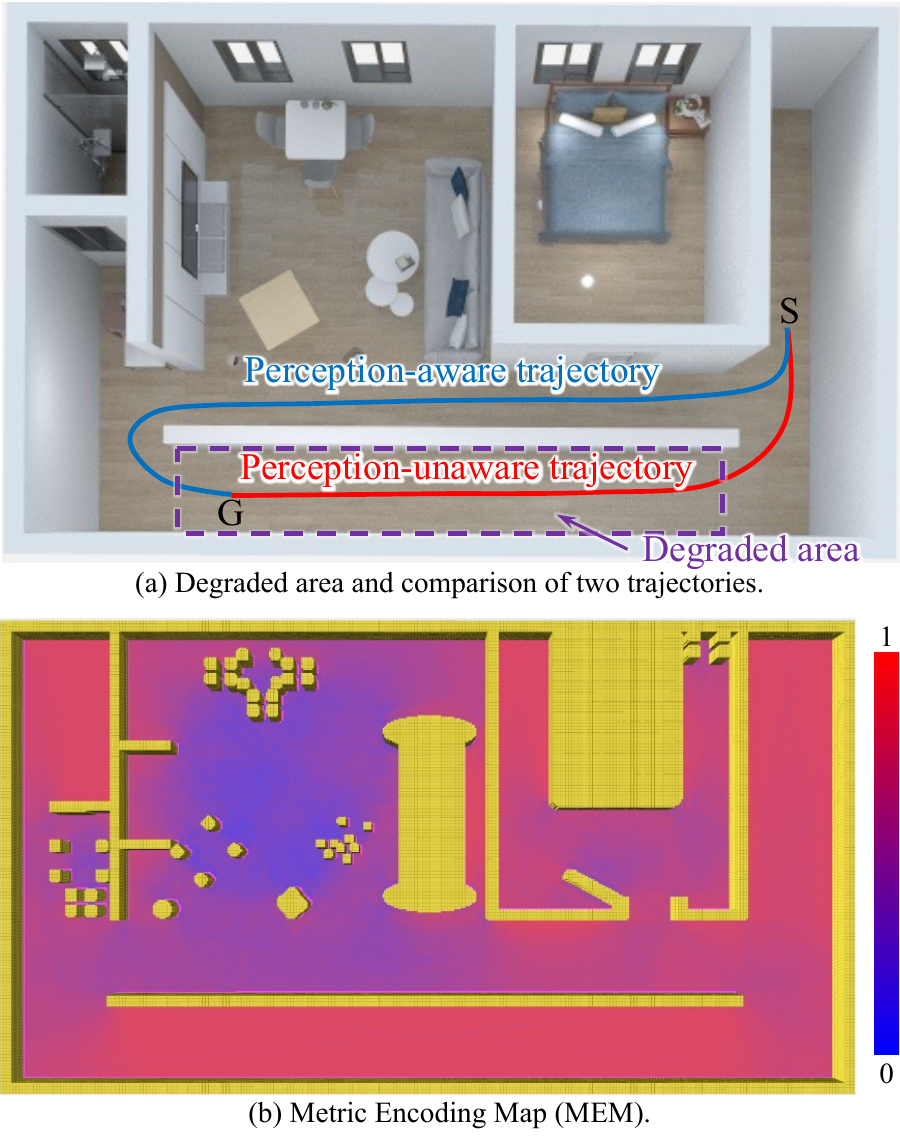}
    \caption{Visualizations of degraded areas and the MEM. (a) The blue trajectory represents the trajectory generated by our perception-aware trajectory planning algorithm, which actively avoids the degraded area and traverses regions with rich geometric features. In contrast, the trajectory produced by the perception-unaware planning algorithm directly enters the degraded area. (b) The degraded areas are assigned high metric values, while regions with rich geometric features are assigned low metric values. We enhance the trajectory's suitability for localization by minimizing its metric value.}
    \label{fig:head}
\end{figure}

For LiDAR localization metrics, disturbance analysis-based approaches have been explored~\cite{chai2024lf,zhang2016degeneracy}. While these methods yield satisfactory results, they require pre-storing metrics for all discrete poses and utilize only a small subset of them during trajectory optimization. This results in a cumbersome and inefficient algorithm. To address this issue, we derive the Geometric Feature Metric (\textbf{GFM}) from the fundamental LiDAR localization problem to provide a more efficient and effective method for quantifying the localization potential of each position.

In practical applications, real-time simulation of LiDAR scans can be computationally expensive. Inspired by the LF-3PM~\cite{chai2024lf}, we discretize the robot’s position into a 2D grid. GFM values are computed at the corner points of each grid cell, and arbitrary positions within the grid can obtain their corresponding GFM values via linear interpolation. Unlike the LF-3PM, which uses a 3D grid map to store metrics for both position and orientation, we employ a 2D grid map where each cell stores a metric encoding that can be decoded to obtain GFM values at any orientation in constant time. This 2D grid map is referred to as the Metric Encoding Map (\textbf{MEM}). Fig.~\ref{fig:head}(b) provides an intuitive visualization of the MEM. Degraded areas, such as long, featureless corridors, are considered disadvantageous for localization, whereas regions rich in geometric features provide significant advantages for localization.

To meet the practical requirements of the robot, including limited LiDAR perception range, obstacle avoidance, and energy efficiency, we propose a heuristic perception-aware path searching algorithm based on the MEM for generating perception-aware reference paths. Subsequently, we optimize the path to generate a trajectory that enhances LiDAR localization by considering multiple factors, including the GFM value, energy cost, safety constraints, and motion feasibility. Fig. 2 illustrates the overall framework of our system, named the \textbf{GFM-Planner}. This meticulously designed perception-aware trajectory planning framework is highly efficient and capable of performing real-time replanning.

We conduct extensive experiments in both simulated and real-world environments to demonstrate the practicality and effectiveness of the proposed perception-aware trajectory planning framework. In the real-world experiment, we compare our framework with a state-of-the-art perception-aware planning method~\cite{chai2024lf}. The experimental results demonstrate that, due to the reliable GFM and an efficient trajectory planning algorithm, the robot can plan trajectories through feature-rich areas in real time, thereby significantly reducing the LiDAR localization error, as shown in Fig.~\ref{fig:realworld}. In addition, the ablation experiments presented in Fig.~\ref{fig:ablation} and Tab.~\ref{tab} highlight the importance of each module within our perception-aware trajectory planning framework.

In summary, the major contributions of this paper are:

\begin{itemize}
    \item A theoretically grounded Geometric Feature Metric derived from the fundamental LiDAR localization problem, quantifying the localizability of poses.
    \item An efficient Metric Encoding Map that stores the GFM values in a 2D grid, enabling real-time metric decoding with constant complexity.
    \item A holistic perception-aware planning framework integrating MEM-based path search and trajectory optimization, validated in real-world scenarios.
    \item Extensive real-world experiments and ablation studies are conducted to validate the practicality and effectiveness of the proposed method. To facilitate further research, we make our source code publicly available\footnote{\url{https://github.com/Yue-0/GFM-Planner}.}.
\end{itemize}

\section{Related Work}

In recent years, vision-based perception-aware planning methods have achieved substantial advancements. Early approach~\cite{costante2016perception} explored paths that reduce localization uncertainty by using the covariance as a heuristic function within rapidly-exploring random trees. The perception-aware receding horizon method~\cite{zhang2018perception} selects the final trajectory with the lowest covariance from multiple candidate trajectories that satisfy obstacle avoidance and kinematic constraints. However, the covariance of a state depends on historical observations gathered from other states, and simulating these observations is computationally expensive, resulting in a prolonged trajectory generation process. The Fisher information field~\cite{zhang2019beyond} models visual feature metrics as differentiable functions of the robot’s pose, enabling the generation of trajectories that enhance visual localization. However, this method primarily focuses on analyzing texture richness, making it inapplicable to LiDAR-based perception-aware planning, which emphasizes the geometric structure of the environment rather than visual texture. Similarly, the APACE~\cite{chen2024apace} optimizes trajectories by maximizing feature matching between frames, offering excellent performance for vision-based perception-aware planning. Unfortunately, in the context of LiDAR, the high density of point clouds makes it impractical to simulate scans during motion planning.

For LiDAR-based perception-aware planning, the receding horizon method~\cite{takemura2022perception} introduced an empirical metric for evaluating LiDAR observations. However, this metric lacks a solid theoretical foundation, limiting its adaptability to diverse scenarios and sensor configurations. The LF-3PM~\cite{chai2024lf} established a more robust metric through perturbation analysis. Nonetheless, this method requires a 3D grid map to pre-store metric values for all poses, and only utilizes a small subset of them during trajectory optimization. This discrepancy leads to significant resource inefficiency. In contrast, our approach employs the 2D MEM to store metric encoding, drastically reducing memory overhead. The MEM is fully utilized throughout both the path search and trajectory optimization stages, enabling the efficient generation of trajectories that enhance LiDAR localization.

\section{Geometric Feature Metric}

\subsection{Start with the Fundamental LiDAR Localization Problem}

In the $SE(2)$ space, the robot's pose is represented as $\mathbf{p} = [x, y, \theta]^\top$. The coordinates of the $i$-th scanning point of the LiDAR at the pose $\mathbf{p}$ are denoted as $\mathbf{L}_i(\mathbf{p}) \in \mathbb{R}^2$. The distance from point $\mathbf{q} \in \mathbb{R}^2$ to its nearest obstacle boundary is defined as $E(\mathbf{q}) \in \mathbb{R}$. Here, we assume that both $E$ and $\mathbf{L}$ are sufficiently smooth functions and that $\nabla E \neq \mathbf{0}$.

The fundamental LiDAR localization problem can be formulated as an optimization task, where the objective is to minimize the alignment error between the LiDAR scan and the global environment. This can be expressed as

\begin{equation}
    \underset{\mathbf{p}}{\min} \quad f \left( \mathbf{p} \right) = \dfrac{1}{2} \sum_{i=1}^{N} E \left( \mathbf{L}_i \left( \mathbf{p} \right) \right)^2,
    \label{eq:problem}
\end{equation}
where $N$ represents the number of scanning points. Let $\mathbf{q}_i = \mathbf{L}_i(\mathbf{p})$, and denote the Jacobian matrix of $\mathbf{q}_i$ with respect to $\mathbf{p}$ as $\mathbf{J}_i(\mathbf{p}) \in \mathbb{R}^{2 \times 3}$. The gradient of the objective function $f$ with respect to $\mathbf{p}$ is then given by
\begin{equation}
    \nabla f \left( \mathbf{p} \right) = \sum_{i=1}^{N} E \left( \mathbf{q}_i \right) \mathbf{J}_i^\top \left( \mathbf{p} \right) \nabla E \left( \mathbf{q}_i \right).
\end{equation}

To further analyze the problem, we derive the Hessian matrix of $f$, which is expressed as
\begin{equation}
    \nabla^2 f \left( \mathbf{p} \right) = \mathbf{H}_1 + \mathbf{H}_2 + \mathbf{H}_3,
\end{equation}
where the individual components are defined as
\begin{subequations}
    \begin{equation}
        \mathbf{H}_1 = \sum_{i=1}^{N} \left( \mathbf{J}_i^\top \left( \mathbf{p} \right) \nabla E \left( \mathbf{q}_i \right) \right) \left( \mathbf{J}_i^\top \left( \mathbf{p} \right) \nabla E \left( \mathbf{q}_i \right) \right)^\top,
    \end{equation}
    \vspace{-2.5 mm}
    \begin{equation}
        \mathbf{H}_2 = \sum_{i=1}^{N} E \left( \mathbf{q}_i \right) \mathbf{J}_i^\top \left( \mathbf{p} \right) \nabla^2 E \left( \mathbf{q}_i \right) \mathbf{J}_i \left( \mathbf{p} \right),
    \end{equation}
    \vspace{-2.5 mm}
    \begin{equation}
        \mathbf{H}_3 = \sum_{i=1}^{N} E \left( \mathbf{q}_i \right) \nabla E \left( \mathbf{q}_i \right)^\top \nabla \mathbf{J}_i \left( \mathbf{p} \right).
    \end{equation}
\end{subequations}

\subsection{Positive Definiteness Analysis}

To ensure that the optimization problem (\ref{eq:problem}) converges to a unique local optimal solution, the function $f$ must be strictly convex in the neighborhood of $\mathbf{p}$. This requirement is equivalent to the Hessian matrix $\nabla^2 f$ being positive definite near $\mathbf{p}$. In this section, we derive a sufficient condition for $\nabla^2 f$ to be positive definite.

\begin{theorem}
    If the Jacobian matrix $\mathbf{J}_i \left( \mathbf{p} \right)$ is full rank for all $i \in \left\lbrace 1, 2, \cdots, N \right\rbrace$, then $\mathbf{H}_1$ is positive definite.
\end{theorem}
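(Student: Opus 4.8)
The plan is to treat $\mathbf{H}_1$ as a Gram-type matrix and reduce positive definiteness to a spanning condition. First I would introduce the shorthand $\mathbf{v}_i := \mathbf{J}_i^\top(\mathbf{p}) \nabla E(\mathbf{q}_i) \in \mathbb{R}^3$, so that $\mathbf{H}_1 = \sum_{i=1}^N \mathbf{v}_i \mathbf{v}_i^\top$ is a sum of rank-one outer products. For an arbitrary $\mathbf{u} \in \mathbb{R}^3$ the associated quadratic form is
\[
\mathbf{u}^\top \mathbf{H}_1 \mathbf{u} = \sum_{i=1}^N \left( \mathbf{v}_i^\top \mathbf{u} \right)^2 \ge 0,
\]
which establishes positive semidefiniteness immediately and shows that equality holds precisely when $\mathbf{v}_i^\top \mathbf{u} = 0$ for every $i$. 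Hence the task reduces to proving that the only $\mathbf{u}$ annihilated by all the $\mathbf{v}_i$ is $\mathbf{u} = \mathbf{0}$, that is, that $\{ \mathbf{v}_1, \dots, \mathbf{v}_N \}$ spans $\mathbb{R}^3$.

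Next I would exploit the two standing hypotheses. Because $\mathbf{J}_i(\mathbf{p}) \in \mathbb{R}^{2 \times 3}$ is full rank, $\mathbf{J}_i^\top(\mathbf{p})$ is an injective linear map from $\mathbb{R}^2$ into $\mathbb{R}^3$; combined with $\nabla E(\mathbf{q}_i) \ne \mathbf{0}$ this guarantees $\mathbf{v}_i \ne \mathbf{0}$ for every $i$, so no single term vanishes. To analyze the joint annihilation condition I would rewrite $\mathbf{v}_i^\top \mathbf{u} = \nabla E(\mathbf{q}_i)^\top \big( \mathbf{J}_i(\mathbf{p}) \mathbf{u} \big)$ and examine, for each $i$, the set of $\mathbf{u}$ for which it vanishes: since $\mathbf{J}_i(\mathbf{p})$ has a one-dimensional kernel and $\nabla E(\mathbf{q}_i)$ is nonzero, this "bad" set is a two-dimensional subspace $\mathcal{U}_i \subset \mathbb{R}^3$. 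Positive definiteness is then equivalent to $\bigcap_{i=1}^N \mathcal{U}_i = \{ \mathbf{0} \}$, so the proof is completed by showing these subspaces intersect trivially.

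The hard part will be this last intersection step, and it is where the geometric content of a LiDAR scan must enter. Per-point full rank alone does not force $\bigcap_i \mathcal{U}_i = \{ \mathbf{0} \}$: if the scan were to return identical Jacobians and parallel gradients (the idealized featureless-corridor situation the paper is designed to detect), all $\mathcal{U}_i$ would coincide in a common plane and $\mathbf{H}_1$ would be only rank two. I would therefore close the argument by bringing in the diversity of the scan, namely that the returns $\mathbf{q}_i$ sample obstacle boundaries with non-collinear normals $\nabla E(\mathbf{q}_i)$ and with Jacobians that vary across orientation, and verifying that two transversely placed subspaces $\mathcal{U}_i, \mathcal{U}_j$ already meet in a line, while a third return off that line drives the intersection down to the origin. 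I expect the main obstacle to be making this transversality precise from the stated smoothness and full-rank assumptions, since it is exactly the quantitative version of ``the environment is geometrically informative in all three pose directions'' that separates localizable poses from degraded ones.
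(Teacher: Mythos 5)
Your setup coincides exactly with the paper's: the Gram-sum argument for positive semidefiniteness and the existence of a common annihilator when definiteness fails are precisely Lemma~\ref{lemma:semi} and Lemma~\ref{lemma:positive} in the appendix, with your $\mathbf{v}_i$ being the paper's $\mathbf{J}_i^\top(\mathbf{p})\nabla E(\mathbf{q}_i)$. The genuine gap is in your final step. You treat $\mathbf{J}_i(\mathbf{p})$ and $\nabla E(\mathbf{q}_i)$ as unconstrained linear-algebraic data, correctly observe that in that setting per-point full rank cannot force $\{\mathbf{v}_i\}$ to span $\mathbb{R}^3$ (your coplanar counterexample is valid there), and therefore plan to import extra hypotheses --- non-collinear normals, transversality of the subspaces $\mathcal{U}_i$ --- that the theorem does not state. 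That route would prove a different statement with strictly stronger assumptions. The missing idea is that $\mathbf{q}_i = \mathbf{L}_i(\mathbf{p})$ is not free data: it is the ray-cast return of the $i$-th beam on the obstacle boundary, which structurally couples $\mathbf{J}_i$ to the local geometry of the level sets of $E$. The paper exploits this coupling to obtain a contradiction from a \emph{single} ray, so no multi-beam diversity argument is ever needed.

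Concretely, the paper reads the annihilation identity (your $\mathbf{v}_i^\top\mathbf{u}=0$ for all $i$) as the homogeneous linear PDE (\ref{eq:partial}) for $E\circ\mathbf{L}_i$ over pose space, concludes that the return point must track a contour line of $E$, Taylor-expands that contour at the return point as in (\ref{eq:taylor}), and then computes the Jacobian of the ray--contour intersection explicitly, yielding (\ref{eq:result}) via the perturbation analysis of Appendix~\ref{app:details}. In that expression both rows are scalar multiples of $\left(k, -1, (1+k^2)(x_0-a)\right)$ --- equivalently, all three columns lie along the boundary tangent $(-B, A)^\top$ --- so $\mathbf{J}_n(\mathbf{p}_0)$ has rank one, contradicting the full-rank hypothesis for that single index $n$. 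In other words, once the return point is constrained to slide along a level set, the image of the Jacobian collapses into the one-dimensional tangent direction of the boundary, and rank deficiency follows pointwise; the quantitative transversality across beams that you identified as the ``hard part'' is exactly what the paper's geometric argument circumvents. Your instinct that pure linear algebra is insufficient was sound --- it is precisely why the paper's proof leaves linear algebra and invokes the ray-casting geometry --- but without that step your proposal does not close, and with your proposed added assumptions it would not be a proof of the stated theorem.
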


\begin{proof}
    We first demonstrate that $\mathbf{H}_1$ is positive semi-definite. Given that $\mathbf{J}_i \left( \mathbf{p} \right)$ is full rank, we have
    \begin{equation}
        \mathbf{J}_i^\top \left( \mathbf{p} \right) \nabla E \left( \mathbf{q}_i \right) \ne \mathbf{0}.
    \end{equation}
    We can treat this vector as $\mathbf{a}_i$ in Lemma~\ref{lemma:semi} of Appendix.~\ref{app:lemma}, which establishes that $\mathbf{H}_1$ is positive semi-definite. Next, we prove by contradiction that $\mathbf{H}_1$ is positive definite. Assume $\mathbf{H}_1$ is not positive definite. According to Lemma~\ref{lemma:positive}, there exists a vector $\boldsymbol{\xi} = \left[ \xi_1, \xi_2, \xi_3 \right]^\top \ne \mathbf{0}$ such that
    \begin{equation}
        \boldsymbol{\xi}^\top \mathbf{J}_i^\top \left( \mathbf{p} \right) \nabla E \left( \mathbf{q}_i \right) = 0,\ \forall i \in \left\lbrace 1, 2, \cdots, N \right\rbrace.
    \end{equation}
    Expanding $\mathbf{J}_i^\top \left( \mathbf{p} \right) \nabla E \left( \mathbf{q}_i \right)$, we have
    \begin{equation}
        \mathbf{J}_i^\top \left( \mathbf{p} \right) \nabla E \left( \mathbf{q}_i \right) = \left[ \dfrac{\partial E \left( \mathbf{q}_i \right)}{\partial x}, \dfrac{\partial E \left( \mathbf{q}_i \right)}{\partial y}, \dfrac{\partial E \left( \mathbf{q}_i \right)}{\partial \theta} \right]^\top.
    \end{equation}
    Thus,
    \begin{equation}
        \xi_1 \dfrac{\partial E \left( \mathbf{q}_i \right)}{\partial x} + \xi_2 \dfrac{\partial E \left( \mathbf{q}_i \right)}{\partial y} + \xi_3 \dfrac{\partial E \left( \mathbf{q}_i \right)}{\partial \theta} = 0.
        \label{eq:partial}
    \end{equation}
    This homogeneous linear partial differential equation implies that $E \left( \mathbf{q}_i \right)$ has contour lines~\cite{evans2022partial}. Let $l$ denote the contour line defined by $E \left( \mathbf{q} \right) = 0$. Suppose the $n$-th LiDAR ray $l_n$ originating from $\mathbf{p}_0^\top = \left( x_0, y_0, \theta_0 \right)$ intersects $l$ at $\mathbf{L}_n \left( \mathbf{p}_0 \right) = (a, b)^\top$. The ray equation is $y = k(x - a) + b$. Performing a first-order Taylor expansion of $l$ at $\mathbf{L}_n \left( \mathbf{p}_0 \right)$, we express $l$ as
    \begin{equation}
        A(x-a) + B(y-b) = 0.
        \label{eq:taylor}
    \end{equation}
    The Jacobian matrix of $\mathbf{L}_n \left( \mathbf{p}_0 \right)$ is given by (Details can be found in Appendix.~\ref{app:details})
    \begin{equation}
        \mathbf{J}_n \left( \mathbf{p}_0 \right) = \dfrac{1}{A+Bk} \begin{bmatrix}
            Bk & -B & (1+k^2)(x_0-a)B \\
            -Ak & A & (1+k^2)(a-x_0)A \\
        \end{bmatrix}.
        \label{eq:result}
    \end{equation}
    It is evident that the first and second rows of $\mathbf{J}_n \left( \mathbf{p}_0 \right)$ are proportional, contradicting the assumption that $\mathbf{J}_i \left( \mathbf{p} \right)$ is full rank. Thus, $\mathbf{H}_1$ is positive definite.
\end{proof}

In practice, the optimization problem (\ref{eq:problem}) is often solved iteratively using the Gauss-Newton method, where $E(\mathbf{q}_i) \approx 0$. Consequently, $\mathbf{H}_2 + \mathbf{H}_3$ can be treated as a small perturbation of the positive definite matrix $\mathbf{H}_1$. Since the set of positive definite matrices is open, this perturbation results in another positive definite matrix~\cite{mathias1997spectral}. Therefore, if $\mathbf{J}_i \left( \mathbf{p} \right)$ remains full rank in the neighborhood of $\mathbf{p}$, the function $f$ is strictly convex in this region.

\subsection{Metric Design}

In this section, we propose the GFM based on the positive definiteness condition established in the previous section. Since the Jacobian matrix $\mathbf{J}_i \in \mathbb{R}^{2 \times 3}$, its rank can only be either one or two. We argue that an insufficient rank of $\mathbf{J}_i$ hinders LiDAR localization, as it results in $f$ not being strictly convex. This, in turn, leads to a non-unique local optimal solution to the optimization problem (\ref{eq:problem}). Consequently, we define the GFM value for each position as
\begin{equation}
    M \left( \mathbf{p} \right) = 2N - \sum_{i=1}^{N} \text{rank} \left( \mathbf{J}_i \left( \mathbf{p} \right) \right).
    \label{eq:rank}
\end{equation}
Here, ${M}$ encapsulates the richness of the geometric features and reflects how well the position facilitates LiDAR-based localization. A smaller value of $M$ indicates that the position provides better support for LiDAR localization, enhancing the accuracy of the localization process.

\section{Perception-Aware Trajectory Planning}

\subsection{Framework Overview}

In this section, we introduce our perception-aware trajectory planning framework, as illustrated in Fig. 2. The framework consists of three key modules: a metric encoding module, a heuristic pre-search module, and a perception-aware planning module. To clarify the terminology, we differentiate between \textit{path} and \textit{trajectory} as described in~\cite{gasparetto2015path}. A path refers to a geometric route, while a trajectory is a time-parameterized path.

In the metric encoding module, we propose a pre-built MEM that is stored prior to the system’s operation. In the heuristic pre-search module, once the user specifies the target navigation pose, we employ a single-source shortest path search algorithm. The output of this algorithm is subsequently used to calculate the heuristic function during each replanning phase. In the perception-aware planning module, we first perform heuristic perception-aware path searching to obtain a reference path. We then parameterize the path and execute perception-aware trajectory optimization.

\begin{figure*}
    \vspace{2 mm}
    \centering
    \includegraphics[width=\linewidth]{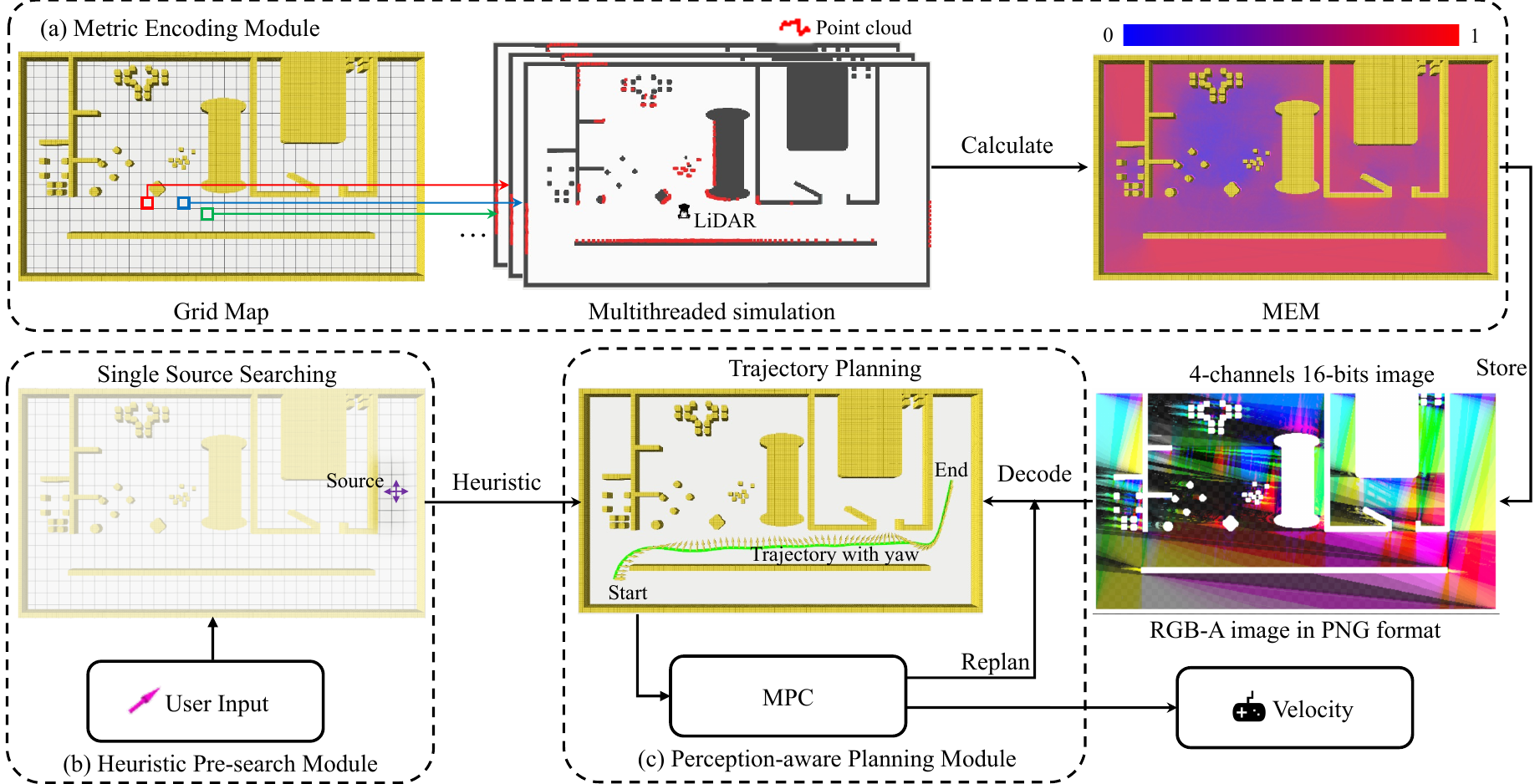}
    \caption{Overview of our perception-aware trajectory planning framework. (a) The metric encoding module encodes the GFM values in the MEM and stores it as a PNG format image. (b) After the user specifies the navigation target pose, the heuristic pre-search module pre-acquires the heuristic function values for each grid. (c) The perception-aware trajectory planning module decodes the GFM in real time to plan a trajectory that enhances LiDAR localization.}
    \label{fig:framework}
\end{figure*}

\subsection{Metric Encoding Map}

\subsubsection{Metric Encoding}

As mentioned previously, simulating LiDAR scans during the planning process is computationally expensive. To address this, we pre-simulate the LiDAR scan results on the grid map, calculate the GFM values for each position, and encode these values into the MEM. Specifically, the MEM has the same size and resolution as the grid map. For each obstacle-free grid cell $(x, y)$ in the map, we discretize the robot’s yaw angle into $L$ distinct angles and simulate the LiDAR scan at each angle. We then compute the rank of the Jacobian matrix $\mathbf{J}_i(x, y)$ corresponding to each LiDAR scan point. Since the rank can only be one or two, we use an $L$-bit integer $q(x, y)$ to encode the rank of the $L$ Jacobian matrices:
\begin{equation}
    q(x, y) = \sum_{i=1}^{L} 2^i - 2^{i-1}\mathrm{rank} \left( \mathbf{J}_i(x, y) \right).
\end{equation}

This encoding ensures that when $\mathbf{J}_i(x, y)$ is of full rank, the $i$-th binary bit of $q(x, y)$ is set to zero. Otherwise, the $i$-th bit is set to one. For grid cells occupied by obstacles, we assign $q(x, y) = 2^L - 1$, indicating that all binary bits are set to one. 

The calculation for each grid is independent, allowing for parallel execution. Once the $q$ values for all obstacle-free grids are obtained, the MEM construction is complete. 

In practice, we select $L = 64$, which allows $q$ to correspond to an unsigned long integer. The MEM can then be stored as a 16-bit, 4-channel image, which is easily processed using mainstream image processing libraries. This format supports efficient trajectory planning tasks.

\subsubsection{Metric Decoding}

In practice, the LiDAR scanning range is limited. Suppose that, for a given yaw angle, the scanning range of the LiDAR covers the $i$-th to  $j$-th discrete angles\footnote{Due to the periodic nature of the angles, we cannot always determine the magnitude relationship between $i$ and $j$. When $i > j$ , it indicates that the LiDAR range encompasses the discrete angles from the 1st to the  $i$-th and from the $j$-th to the $L$-th angles.}. The $q$ value at this time is expressed as
\begin{equation}
    q_{ij}(x, y) = \begin{cases}
        \left( 2^j - 2^{i-1} \right) \circ q(x, y), & i \le j, \\
        \lnot \left( 2^i - 2^{j-1} \right) \circ q(x, y), & i > j,
    \end{cases}
    \label{eq:metric}
\end{equation}
where $\circ$ represents the bitwise AND operation, and $\lnot$ represents the bitwise inversion operation. Both operations are applied to $L$-bit unsigned integers.

During the trajectory planning process, it is essential to decode the metric value $M_{ij}$ from $q_{ij}$ in real time. By combining the equations (\ref{eq:rank})$-$(\ref{eq:metric}), we derive the relationship between $M_{ij}$ and $q_{ij}$ as
\begin{equation}
    M_{ij}(x, y) = \mathcal{W}(q_{ij}(x, y)),
\end{equation}
where $\mathcal{W}(\chi)$ denotes the Hamming weight of the integer $\chi$, which refers to the number of ones in its binary representation. This value can be computed in constant time.

For continuous robot poses and LiDAR scan ranges, we use quartic linear interpolation to determine the measurement value for any LiDAR scan range when the robot is at a given pose. This encoding and decoding method is designed to ensure that the MEM allows us to obtain localization metric values with consistent time complexity, thereby optimizing the efficiency of trajectory planning.

\subsection{Perception-Aware Path Planning}

The perception-aware path planning method is based on a hybrid-state A* algorithm~\cite{dolgov2010path}, which expands nodes generated by syncretizing the control input to search for safe and dynamically feasible paths on a grid map. To integrate perception-awareness into the search process, we redesign both the cost function and the heuristic function to fully leverage the GFM.

In the hybrid-state A* algorithm, the cost function for a node is represented as $f_c = g_c + h_c$, where $g_c$ denotes the actual cost from the initial pose $\mathbf{p}_0$ to the current pose $\mathbf{p}_c$, and $h_c$ represents the heuristic cost, which aims to expedite the search process.

Suppose it takes $n_c$ poses to transition from the initial pose $\mathbf{p}_0$ to the current pose $\mathbf{p}_c$, where the $i$-th pose is denoted as $\mathbf{p}_{ci}$. To enhance the accuracy of LiDAR localization, we define the actual cost as
\begin{equation}
    g_c = \sum_{i=1}^{n_c} \sigma_\varepsilon \left( M \left( \mathbf{p}_{ci} \right) \right),
\end{equation}
where $\varepsilon$ is a positive hyper-parameter, and $\sigma_\varepsilon: [0, L] \mapsto (0, 1)$ is the Sigmoid smoothing function:
\begin{equation}
    \sigma_\varepsilon(m) = \left( 1 + \exp \left( \dfrac{\varepsilon L - 2 \varepsilon m}{L} \right) \right)^{-1}.
\end{equation}

For the heuristic cost, we designate the grid containing the goal pose $\mathbf{p}_g$ as the source grid $s$, and apply a single-source shortest path search algorithm for directed weighted graphs~\cite{dijkstra2022note} on the MEM. The weight between grid $a$ and its adjacent grid $b$ is set to the GFM value of grid $b$. This allows us to compute the path from each grid on the MEM to the source grid, ensuring that the sum of the sigmoid values of the GFM values of all grids along the path is minimized. For each grid, this value is stored as the heuristic function value for that grid, expressed as
\begin{equation}
    h_c = \underset{P} \min \sum_{(x, y) \in P} \sigma_\varepsilon (M_{1L}(x, y)),
\end{equation}
where $P$ represents the path from grid $s$ to the current grid. It is important to note that since the pre-built MEM remains unchanged during the planning process, the single-source shortest path search algorithm only needs to be executed once when the user specifies the goal pose. In each subsequent replanning process, we can directly query the heuristic function value of any grid.

\subsection{Perception-Aware Trajectory Optimization}

\subsubsection{Trajectory Representation}

After obtaining the reference path through perception-aware path planning, we parameterize it into a trajectory for optimization. First, we extract key poses $\lbrace \mathbf{q}_0, \mathbf{q}_1, \cdots, \mathbf{q}_K \rbrace$ along the path using the method described in~\cite{lin2024safety}, where $\mathbf{q}_0$ and $\mathbf{q}_K$ represent the initial pose and the goal pose, respectively. We then parameterize the path into a MINCO trajectory~\cite{wang2022geometrically} using these key poses:
\begin{equation}
    \mathfrak{T} = \lbrace \mathbf{p}(t): \mathbb{R}_+ \mapsto \mathbb{R}^3 \ | \ \mathbf{C} = \mathcal{M} \left( \mathbf{Q}, \mathbf{t} \right) \rbrace,
\end{equation}
where $\mathbf{p}(t)$ is a polynomial of order $2s - 1$ with $K$ segments, and $\mathbf{Q} = \left[ \mathbf{q}_1, \mathbf{q}_2, \cdots, \mathbf{q}_{K-1} \right] \in \mathbb{R}^{3 \times (K-1)}$ represents the intermediate way points. The vector $\mathbf{t} = \left[ t_1, t_2, \cdots, t_K \right]^\top \in \mathbb{R}_+^K$ denotes the duration of each trajectory segment, while $\mathbf{C} = \left[ \mathbf{C}_1, \mathbf{C}_2, \cdots, \mathbf{C}_K \right] \in \mathbb{R}^{K \times 2s \times 3}$ contains the coefficients for the trajectory segments. The function $\mathcal{M}$ maps $\mathbf{Q}$ and $\mathbf{t}$ to $\mathbf{C}$ with linear complexity, allowing any second-order continuous cost function $J \left( \mathbf{C}, \mathbf{t} \right)$ to have a gradient applicable to the MINCO trajectory~\cite{wang2022geometrically}. The $i$-th segment of $\mathbf{p}$ is denoted as
\begin{equation}
    \mathbf{p}_i(t) = \left[ x_i(t), y_i(t), \theta_i(t) \right]^\top = \mathbf{C}_i^\top \boldsymbol{\beta}(t),
\end{equation}
where $\boldsymbol{\beta}(t) = \left[1, t, t^2, \cdots, t^{2s-1} \right]^\top$ is the natural basis.

\subsubsection{Problem Formulation}

The perception-aware trajectory optimization problem can be formulated as
\begin{subequations}
    \begin{align}
        \underset{\mathbf{Q}, \mathbf{t}}{\min} \quad & \lambda_l J_l + \lambda_e J_e, \label{eq:opt}\\
        \mathrm{s.t.} \quad & t_i > 0, \label{eq:time}\\
        \quad & E(\mathbf{\Pi}_2 \mathbf{p}_i(t)) \ge d, \label{eq:safe}\\
        \quad & \left \vert \mathbf{e}_3^\top \mathbf{p}_i^{(j)}(t) \right \vert \le \omega_j, \forall j \in \lbrace 1, 2 \rbrace,\label{eq:vel1}\\
        \quad & \left \Vert \mathbf{\Pi}_2 \mathbf{p}_i^{(j)}(t) \right \Vert \le v_j, \forall j \in \lbrace 1, 2 \rbrace,\label{eq:vel2}
    \end{align}
\end{subequations}
where $J_l$ is the localization cost term, $J_e$ is the energy cost term, and $\lambda_l$ and $\lambda_e$ are weights for these terms. Constraint (\ref{eq:time}) ensures that each element of the vector $\mathbf{t}$ lies within the time manifold $\mathbb{R}_+$, providing physical meaning. Constraint (\ref{eq:safe}) ensures that the robot avoids collisions with obstacles, where $\mathbf{\Pi}_2 = \left( \mathbf{e}_1, \mathbf{e}_2 \right)^\top$, $\mathbf{e}_i$ denotes the $i$-th column of the third-order identity matrix, and $d$ is a hyper-parameter representing the safety distance. Constraints (\ref{eq:vel1})$-$(\ref{eq:vel2}) ensure kinematic feasibility, with $v_1$, $v_2$, $\omega_1$, and $\omega_2$ representing the velocity, acceleration, angular velocity, and angular acceleration limits of the robot, respectively.

\subsubsection{Localization Cost}

The localization cost seeks to minimize the GFM value of all poses along the trajectory, which can be written as
\begin{equation}
    J_l = \sum_{i=1}^{K} \int_{0}^{t_i} \sigma_\varepsilon \left( M \left( \mathbf{p}_i(t) \right) \right) \mathrm{d}t.
\end{equation}
We calculate this cost through numerical integration:
\begin{subequations}
    \begin{equation}
        J_l  \approx  \sum_{i=1}^{K} \dfrac{t_i}{k_i} \sum_{j=0}^{k_i} \eta_j \sigma_\varepsilon \left( \mu_{ij} \right),
    \end{equation}
    \begin{equation}
        \mu_{ij} = M \left( \mathbf{p}_i \left( \dfrac{jt_i}{k_i} \right) \right),
    \end{equation}
\end{subequations}
where $\left( \eta_0, \eta_1, \eta_2, \cdots, \eta_{k_i - 1}, \eta_{k_i} \right) = (0.5, 1, 1, \cdots, 1, 0.5)$, and $k_i$ represents the number of samples for the $i$-th numerical integration. We present the derivative of $\sigma_\varepsilon$ as
\begin{equation}
    \sigma_\varepsilon'(\mu) = \dfrac{2\varepsilon L^{-1}}{2 + \exp \left( \dfrac{2\varepsilon \mu}{L}-\varepsilon \right) + \exp \left( \varepsilon - \dfrac{2\varepsilon \mu}{L} \right)}.
\end{equation}
Then, the gradient of $J_l$ with respect to $\mathbf{C}_i$ and $t_i$ is
\begin{subequations}
    \begin{equation}
        \dfrac{\partial J_l}{\partial \mathbf{C}_i} = \dfrac{t_i}{k_i} \sum_{j=0}^{k_i} \eta_j \sigma_\varepsilon' \left( \mu_{ij} \right) \boldsymbol{\beta} \left( \dfrac{jt_i}{k_i} \right) \boldsymbol{\delta}_{ij}^\top,
    \end{equation}
    \begin{equation}
        \dfrac{\partial J_l}{\partial t_i} = \sum_{j=0}^{k_i} \dfrac{\eta_j \sigma_\varepsilon \left( \mu_{ij} \right)}{k_i} + \dfrac{jt_i\eta_j \sigma_\varepsilon' \left( \mu_{ij} \right)}{k_i^2} \boldsymbol{\delta}_{ij}^\top \mathbf{p}_i' \left( \dfrac{jt_i}{k_i} \right),
    \end{equation}
\end{subequations}
where $\boldsymbol{\delta}_{ij} = \nabla \mu_{ij}$, which can be obtained through quartic linear interpolation. Subsequently, the gradient is efficiently propagated to $\mathbf{Q}$ and $\mathbf{t}$ by the function $\mathcal{M}$.

\subsubsection{Energy Cost}

The energy cost is defined by the higher-order derivatives of the MINCO trajectory along with a time regularization term:
\begin{equation}
    J_e = \sum_{i=1}^{K} \int_{0}^{t_i} \Vert \mathbf{p}_i^{(s)}(t) \Vert_2^2 \mathrm{d}t + \rho t_i,
\end{equation}
where $\rho$ is the weight of the regularization term. The gradient of $J_e$ can be computed analytically~\cite{wang2022geometrically}.

\subsubsection{Constraints Elimination}

Constraint (\ref{eq:time}) is eliminated through a diffeomorphic transformation:
\begin{equation}
    \tau_i = \begin{cases}
        \sqrt{2t_i-1} - 1, & t_i > 1, \\
        1 - \sqrt{\dfrac{2}{t_i} - 1}, & 0 < t_i \le 1.
    \end{cases}
\end{equation}
For constraints (\ref{eq:safe})$-$(\ref{eq:vel2}), we construct penalty functions:
\begin{subequations}
    \begin{equation}
        G_d \left( \mathbf{p}, t \right) = \max \left\lbrace d - E \left( \mathbf{\Pi}_2 \mathbf{p}(t) \right), 0 \right\rbrace^s,
    \end{equation}
    \begin{equation}
        G_{v_j} \left( \mathbf{p}, t \right) = \max \left\lbrace \left \Vert \mathbf{\Pi}_2 \mathbf{p}^{(j)}(t) \right \Vert_2^2 - v_j^2, 0 \right\rbrace,
    \end{equation}
    \begin{equation}
        G_{\omega_j} \left( \mathbf{p}, t \right) = \max \left\lbrace \left( \mathbf{e}_3^\top \mathbf{p}^{(j)}(t) \right)^2 - \omega_j^2, 0 \right\rbrace.
    \end{equation}
\end{subequations}
These constraints are converted into a penalty term:
\begin{equation}
    J_G = \sum_{i=1}^{K} \dfrac{t_i}{k_i} \sum_{j=0}^{k_i} \eta_j \sum_{g \in G}  g \left( \mathbf{p}_i, \dfrac{jt_i}{k_i} \right),
\end{equation}
where $G = \lbrace G_d, G_{v_1}, G_{v_2}, G_{\omega_1}, G_{\omega_2} \rbrace$. Thus, the original problem is reformulated as
\begin{equation}
    \underset{\mathbf{Q}, \mathbf{\tau}}{\min} \quad \lambda_l J_l + \lambda_e J_e + \lambda_G J_G,
\end{equation}
where $\lambda_G$ is a sufficiently large positive number. This unconstrained optimization problem can be efficiently solved using the L-BFGS algorithm~\cite{liu1989limited} along with the Lewis-Overton line search~\cite{lewis2013nonsmooth}.

\section{Experiments}

\subsection{Real-World Experiment}

To demonstrate the practicality of our approach, we apply the proposed framework to an omnidirectional vehicle (RoboMaster AI-2020), as shown in Fig.~\ref{fig:realworld}(a). Furthermore, we compare the proposed framework with a state-of-the-art LiDAR-based perception-aware planning framework LF-3PM~\cite{chai2024lf}. We manually limit the LiDAR’s field of view to 90 degrees to simulate limited observations. 

In the experiment, we set the hyper-parameters to $s = 3$, $\rho = 20$, $\varepsilon = \lambda_e = \lambda_l = 1$ and $\lambda_G = 10^4$. The robot’s pose is estimated using the AMCL algorithm~\cite{thrun2002probabilistic}, with the ground truth position provided by the UWB localization module. We define the localization error as the Euclidean distance between these two positions. All computations are performed on the Jetson AGX Orin platform.

The robot navigates from point A to point B, and the trajectories generated by both our proposed framework and the LF-3PM are shown in Fig.~\ref{fig:realworld}(b). Since the LF-3PM considers the localization metric only during the trajectory optimization stage, the trajectories it generates are primarily guided by a perception-unaware reference path. Consequently, the trajectory optimizer cannot completely compensate for this limitation, leading to significant localization errors. In contrast, our method integrates the GFM in both the path search and trajectory optimization stages, resulting in trajectories that better support LiDAR localization. As illustrated in Fig.~\ref{fig:realworld}(c), the trajectory generated by our method effectively incorporates the geometric features of the environment, resulting in a trajectory with significantly lower localization error compared to the baseline.

\begin{figure}[h]
    \centering
    \includegraphics[width=\linewidth]{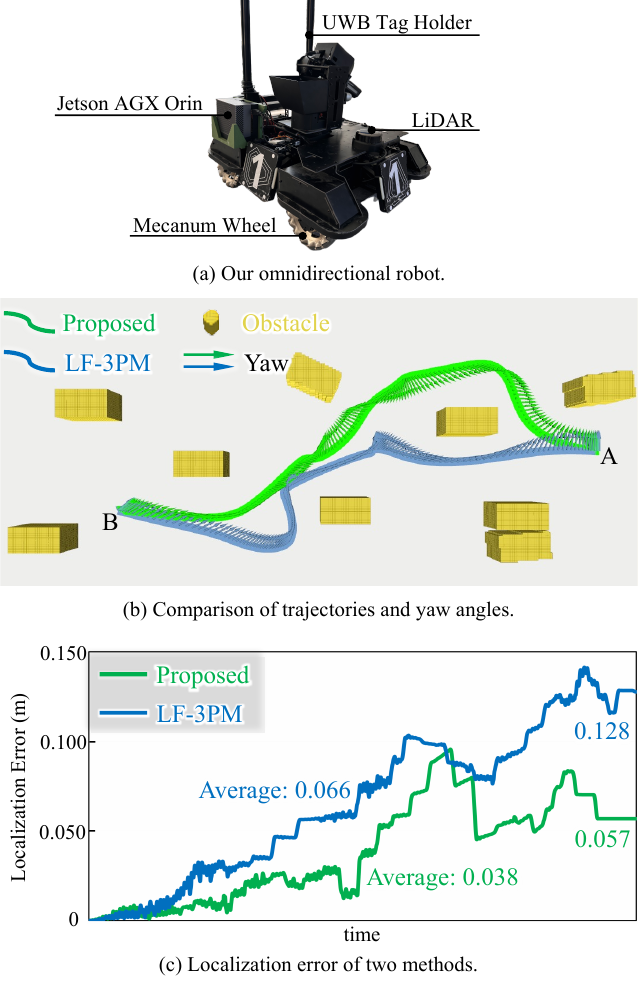}
    \caption{Description and results of real-world experiment.}
    \label{fig:realworld}
\end{figure}

\subsection{Ablation Studies}

In this section, we conduct ablation experiments to validate the necessity of both the proposed perception-aware path search and perception-aware trajectory optimization processes. First, we replace our perception-aware path search algorithm with a standard hybrid A* algorithm~\cite{dolgov2010path}. Additionally, we remove the localization cost $J_l$ from the trajectory optimization process, relying solely on the perception-aware reference path to guide the overall trajectory direction. All experiments are conducted in the simulated environment depicted in Fig.~\ref{fig:head}(a). The physical parameters of the robot and the field of view of the LiDAR are the same to those used in the real-world experiment. All computations are performed on a mobile device equipped with an Intel Core i7-1165G7 CPU and a GeForce RTX 2060 GPU, and the hyper-parameter settings are consistent with those of the real-world experiment.

We record the trajectory for each case in Fig.~\ref{fig:ablation}. Since the hybrid A* algorithm does not incorporate geometric feature awareness, the generated reference path can lead the robot into degraded areas, making it difficult for the local optimizer to correct this deficiency. Furthermore, eliminating the localization cost in trajectory optimization results in a trajectory that prioritizes energy efficiency while neglecting perception constraints. Although the trajectory does not enter a degraded area, the robot's yaw angle does not account for geometric features, which fails to significantly enhance localization accuracy when the LiDAR perception field of view is restricted. In contrast, the complete planner effectively balances perceptual and energy costs. The perception-aware path searching allows the robot to circumvent degraded areas, while the optimized yaw angle enhances the LiDAR to perceive rich geometric features, thereby significantly improving localization accuracy.

We repeat the experiment twenty times for each method. Tab.~\ref{tab} presents the mean localization error of three methods during navigation, as well as the average deviation between the robot's position upon reaching the endpoint and the goal position. Additionally, we record the computation time for global trajectory planning for each method. The results demonstrate that the complete planner significantly reduces the robot’s positioning error during navigation, allowing the robot to reach the target with greater accuracy. Furthermore, the computational time for trajectory planning is comparable to that of other methods. These findings highlight the critical importance of both perception-aware path searching and localization cost in achieving robust and localization-enhanced trajectory planning. Both components are essential, the absence of either results in performance degradation.

\begin{table}[h]
    \centering
    \caption{Results of Ablation Experiment}
    \begin{tabular}{l c c c}
        \hline
        Method                & Localization Error & Deviation        & Duration          \\
        \hline
        w/o path searching    & 0.057 m            & 0.176 m          & \textbf{0.197} s  \\
        w/o localization cost & 0.090 m            & 0.210 m          & 0.202 s           \\
        \hline
        complete planner      & \textbf{0.032} m   & \textbf{0.104} m & 0.255 s           \\
        \hline
    \end{tabular}
    \label{tab}
\end{table}

\begin{figure}[t]
    \centering
    \vspace{2 mm}
    \includegraphics[width=\linewidth]{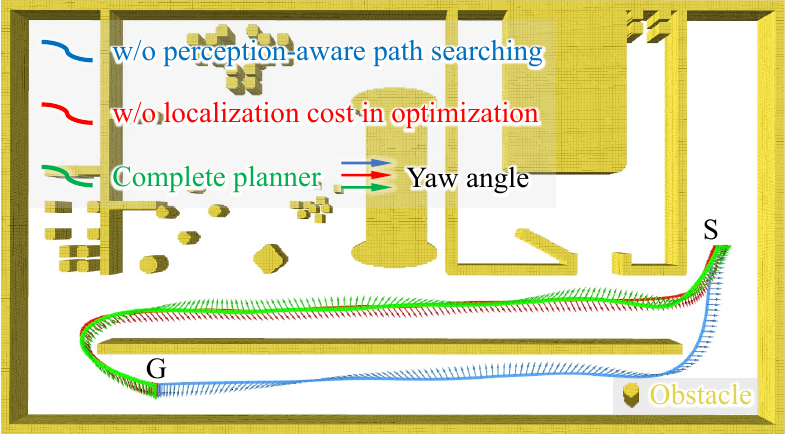}
    \caption{Comparison of trajectories and yaw angles in ablation experiment.}
    \label{fig:ablation}
\end{figure}

\section{Conclusion and Limitations}

In this paper, we propose a new perception-aware trajectory planning framework, referred to as the GFM-Planner, which effectively reduces the robot’s localization error during navigation by leveraging a reliable GFM and a well-designed trajectory planning process. Real-world experiments demonstrate that the proposed framework achieves superior localization accuracy compared to state-of-the-art methods. Furthermore, ablation experiments highlight the critical role of each individual module in the framework. We hope that this work will contribute to the practical advancement of perception-aware trajectory planners and inspire further research in the field of perception-aware planning. In the future, we plan to extend our method to the $SE(3)$ space and apply it to perception-aware trajectory planning for aerial and underwater robots.

One limitation of the GFM-Planner is that the construction of the MEM relies on pre-built static maps, which makes the framework unsuitable for dynamic environments or unknown scenes. Additionally, the GFM design assumes ideal LiDAR registration conditions, which may not hold in cluttered scenes with occlusions. In our future work, we will address these limitations through online metric updates and robustness analysis under sensor noise.

\appendix

\subsection{Lemmas}
\label{app:lemma}

Let $\mathbf{a}_1, \mathbf{a}_2, \ldots, \mathbf{a}_N$ be nonzero vectors, and let the matrix
\begin{equation}
    \mathbf{A} = \sum_{i=1}^{N} \mathbf{a}_i \mathbf{a}_i^\top.
\end{equation}

\begin{lemma}
    The matrix $\mathbf{A}$ is positive semi-definite.
    \label{lemma:semi}
\end{lemma}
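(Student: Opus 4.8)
The plan is to verify the defining inequality of positive semi-definiteness directly from the quadratic-form characterization, exploiting the rank-one structure of each summand $\mathbf{a}_i \mathbf{a}_i^\top$. Recall that $\mathbf{A}$ is positive semi-definite precisely when $\mathbf{x}^\top \mathbf{A} \mathbf{x} \ge 0$ for every vector $\mathbf{x}$ of compatible dimension, so it suffices to evaluate this scalar quantity.

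First I would fix an arbitrary test vector $\mathbf{x}$ and expand the quadratic form using the linearity of the defining sum, obtaining
\begin{equation}
    \mathbf{x}^\top \mathbf{A} \mathbf{x} = \sum_{i=1}^{N} \mathbf{x}^\top \mathbf{a}_i \mathbf{a}_i^\top \mathbf{x}.
\end{equation}
The key observation is that $\mathbf{a}_i^\top \mathbf{x}$ is a real scalar, so that $\mathbf{x}^\top \mathbf{a}_i = \left( \mathbf{a}_i^\top \mathbf{x} \right)^\top = \mathbf{a}_i^\top \mathbf{x}$, and each summand collapses to a perfect square $\left( \mathbf{a}_i^\top \mathbf{x} \right)^2 \ge 0$. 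Summing nonnegative terms yields $\mathbf{x}^\top \mathbf{A} \mathbf{x} \ge 0$, and since $\mathbf{x}$ was arbitrary, $\mathbf{A}$ is positive semi-definite.

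There is no real obstacle here, as this is a standard fact; the only subtlety worth flagging is the scalar-transpose identity that lets us rewrite $\mathbf{x}^\top \mathbf{a}_i \mathbf{a}_i^\top \mathbf{x}$ as a square. I would also note in passing that the hypothesis that the vectors $\mathbf{a}_i$ are nonzero is not actually required for semi-definiteness; it becomes relevant only for the strict positive-definiteness argument invoked in the main theorem, where the nonvanishing of $\mathbf{J}_i^\top \nabla E$ together with the full-rank condition rules out a common null direction.
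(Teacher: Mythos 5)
Your proof is correct and follows essentially the same route as the paper: expanding the quadratic form $\boldsymbol{\xi}^\top \mathbf{A} \boldsymbol{\xi}$ into the sum of squares $\sum_{i=1}^{N} \left( \mathbf{a}_i^\top \boldsymbol{\xi} \right)^2 \ge 0$. Your side remark is also accurate --- the nonzero hypothesis on the $\mathbf{a}_i$ plays no role in semi-definiteness and matters only for the positive-definiteness argument in the main theorem.
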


\begin{proof}
    For any nonzero vector $\boldsymbol{\xi}$, we have 
        \begin{equation}
            \boldsymbol{\xi}^\top \mathbf{A} \boldsymbol{\xi} = \sum_{i=1}^{N} \left( \mathbf{a}_i^\top \boldsymbol{\xi} \right)^\top \left( \mathbf{a}_i^\top \boldsymbol{\xi} \right) = \sum_{i=1}^{N} \left( \mathbf{a}_i^\top \boldsymbol{\xi} \right )^2 \ge 0.
        \end{equation}
        Therefore, the matrix $\mathbf{A}$ is positive semi-definite.
\end{proof}

\begin{lemma}
    If the matrix $\mathbf{A}$ is not positive definite, then there exists a nonzero vector $\boldsymbol{\xi}$ such that $\mathbf{a}_i^\top \boldsymbol{\xi} = 0$, for all $i \in \left\lbrace 1, 2, \cdots, N \right\rbrace$.
    \label{lemma:positive}
\end{lemma}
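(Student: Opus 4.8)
The plan is to combine the positive semi-definiteness already established in Lemma~\ref{lemma:semi} with the explicit sum-of-squares form of the quadratic form $\boldsymbol{\xi}^\top \mathbf{A} \boldsymbol{\xi}$. The logical skeleton is short: negate the definition of positive definiteness, use semi-definiteness to pin the offending quadratic form to exactly zero, and then exploit that a sum of real squares vanishes only when every summand does.

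First I would recall from Lemma~\ref{lemma:semi} that $\mathbf{A}$ is positive semi-definite, so $\boldsymbol{\xi}^\top \mathbf{A} \boldsymbol{\xi} \ge 0$ for every $\boldsymbol{\xi}$. Next I would unpack the hypothesis that $\mathbf{A}$ is \emph{not} positive definite. By definition, positive definiteness requires $\boldsymbol{\xi}^\top \mathbf{A} \boldsymbol{\xi} > 0$ for all nonzero $\boldsymbol{\xi}$; its negation therefore asserts the existence of some nonzero $\boldsymbol{\xi}$ with $\boldsymbol{\xi}^\top \mathbf{A} \boldsymbol{\xi} \le 0$. Intersecting this with the semi-definite bound $\boldsymbol{\xi}^\top \mathbf{A} \boldsymbol{\xi} \ge 0$ forces $\boldsymbol{\xi}^\top \mathbf{A} \boldsymbol{\xi} = 0$ for that particular $\boldsymbol{\xi}$.

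The final step reuses the computation from the proof of Lemma~\ref{lemma:semi}, namely $\boldsymbol{\xi}^\top \mathbf{A} \boldsymbol{\xi} = \sum_{i=1}^{N} (\mathbf{a}_i^\top \boldsymbol{\xi})^2$. Having just shown this sum equals zero, and since each term $(\mathbf{a}_i^\top \boldsymbol{\xi})^2$ is a nonnegative real number, every term must individually vanish. Hence $\mathbf{a}_i^\top \boldsymbol{\xi} = 0$ for all $i \in \lbrace 1, 2, \dots, N \rbrace$, with $\boldsymbol{\xi}$ the nonzero vector produced above, which is exactly the claim.

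I do not anticipate a genuine obstacle here; the argument is a standard characterization of the common kernel of a Gram-type matrix. The only point requiring care is the logical handling of the middle step, namely correctly negating the universally quantified strict inequality in the definition of positive definiteness and confirming that the candidate $\boldsymbol{\xi}$ is nonzero, so that the existence assertion is not inadvertently weakened into a vacuous statement about the zero vector. Everything else is routine bookkeeping on a sum of squares.
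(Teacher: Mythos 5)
Your proof is correct, and it takes a mildly but genuinely different route from the paper's. The paper proves the statement by contradiction in the contrapositive direction: it assumes that every nonzero $\boldsymbol{\xi}$ admits some index $j$ with $\mathbf{a}_j^\top \boldsymbol{\xi} \neq 0$, bounds $\boldsymbol{\xi}^\top \mathbf{A} \boldsymbol{\xi} = \sum_{i=1}^{N} \left( \mathbf{a}_i^\top \boldsymbol{\xi} \right)^2 \ge \left( \mathbf{a}_j^\top \boldsymbol{\xi} \right)^2 > 0$, and concludes $\mathbf{A}$ would be positive definite, contradicting the hypothesis; notably, it never needs Lemma~\ref{lemma:semi} inside this argument. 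You instead argue forwards: negating the universally quantified strict inequality produces a nonzero witness $\boldsymbol{\xi}$ with $\boldsymbol{\xi}^\top \mathbf{A} \boldsymbol{\xi} \le 0$, Lemma~\ref{lemma:semi} squeezes this to exact equality, and the vanishing of a sum of nonnegative squares forces $\mathbf{a}_i^\top \boldsymbol{\xi} = 0$ for every $i$. Both versions pivot on the same sum-of-squares identity and are, logically, contrapositives of one another, so the mathematical content coincides; what your direct version buys is an explicit construction of the common annihilator as the very witness of failed definiteness, together with a clean composition of the two lemmas, while the paper's version is marginally leaner in that it dispenses with the semi-definiteness squeeze. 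Your flagged point of care --- negating the definition correctly so that the witness is genuinely nonzero --- is exactly the right place to be careful, and you handle it properly.
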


\begin{proof}
    We proceed by contradiction. Suppose that for every nonzero vector $\boldsymbol{\xi}$, there exists at least one index $j$ such that $\mathbf{a}_j^\top \boldsymbol{\xi} \neq 0$. Then, we obtain
        \begin{equation}
            \boldsymbol{\xi}^\top \mathbf{A} \boldsymbol{\xi} = \sum_{i=1}^{N} \left( \mathbf{a}_i^\top \boldsymbol{\xi} \right )^2 \ge \left( \mathbf{a}_j^\top \boldsymbol{\xi} \right)^2 > 0.
        \end{equation}
    This implies that the matrix $\mathbf{A}$ is positive definite, contradicting the given condition that it is not. Therefore, there must exist a nonzero vector $\boldsymbol{\xi}$ satisfying $\mathbf{a}_i^\top \boldsymbol{\xi} = 0$, for all $i \in \left\lbrace 1, 2, \cdots, N \right\rbrace$.
\end{proof}

\subsection{Detailed Derivation of Formula (\ref{eq:result})}
\label{app:details}

The equation of $l_n$ can be expressed as $y = k \left( x - x_0 \right) + y_0$, where $y_0 = k(x_0 - a) + b$. Define $\Delta \mathbf{p} = \left( \Delta x, \Delta y, \Delta \theta \right)^\top$, and let $\mathbf{e}_i$ denote the $i$-th column of the third-order identity matrix. We consider the LiDAR ray equations for different perturbed poses. For the pose $\mathbf{p}_x = \mathbf{p}_0 + \mathbf{e}_1^\top \Delta \mathbf{p}$, the corresponding LiDAR ray equation is $y = k(x-x_0-\Delta x) + y_0$. For the pose $\mathbf{p}_y = \mathbf{p}_0 + \mathbf{e}_2^\top \Delta \mathbf{p}$, the LiDAR ray equation is $y = k(x - x_0) + y_0 + \Delta y$. For the pose $\mathbf{p}_\theta = \mathbf{p}_0 + \mathbf{e}_3^\top \Delta \mathbf{p}$, the LiDAR ray equation is $y = \hat{k}(x - x_0) + y_0$, where
\begin{equation}
    \hat{k} = \tan \left( \arctan k + \Delta \theta \right) = \dfrac{k + \tan \Delta \theta}{1 - k \tan \Delta \theta}.
\end{equation}
The intersection points of these LiDAR rays with the Taylor expansion (\ref{eq:taylor}) are
\begin{subequations}
    \begin{equation}
        \mathbf{L}_n \left( \mathbf{p}_x \right) = \left[ a + \dfrac{Bk \Delta x}{A+Bk}, b - \dfrac{Ak \Delta x}{A+Bk} \right]^\top,
    \end{equation}
    \begin{equation}
        \mathbf{L}_n \left( \mathbf{p}_y \right) = \left[ a - \dfrac{B\Delta y}{A+Bk}, b + \dfrac{A\Delta y}{A+Bk} \right]^\top,
    \end{equation}
    \begin{equation}
        \mathbf{L}_n \left( \mathbf{p}_\theta \right) = \begin{bmatrix}
            \dfrac{a(A+Bk)+Bx_0(\hat k-k)}{A+B \hat k} \\
            b + \dfrac{A(a-x_0)(\hat k-k)}{A+B\hat k}
        \end{bmatrix}.
    \end{equation}
\end{subequations}
Therefore, we have
\begin{subequations}
    \begin{equation}
        \underset{\Delta x \to 0}{\lim} \dfrac{\mathbf{L}_n \left( \mathbf{p}_x \right) - \mathbf{L}_n \left( \mathbf{p}_0 \right)}{\Delta x} = \dfrac{-k}{A+Bk} \mathbf{M},
    \end{equation}
    \begin{equation}
        \underset{\Delta y \to 0}{\lim} \dfrac{\mathbf{L}_n \left( \mathbf{p}_y \right) - \mathbf{L}_n \left( \mathbf{p}_0 \right)}{\Delta y} = \dfrac{1}{A+Bk} \mathbf{M},
    \end{equation}
    \begin{equation}
        \underset{\Delta \theta \to 0}{\lim} \dfrac{\mathbf{L}_n \left( \mathbf{p}_\theta \right) - \mathbf{L}_n \left( \mathbf{p}_0 \right)}{\Delta \theta} = \dfrac{(1+k^2)(a-x_0)}{A+Bk} \mathbf{M},
    \end{equation}
\end{subequations}
where $\mathbf{M} = (-B, A)^\top$. Thus, formula (\ref{eq:result}) follows directly from these derivations.

\bibliographystyle{IEEEtran}
\bibliography{cite}

\end{document}